\def\Figref#1{Figure~\ref{#1}}
\def\eqref#1{equation~\ref{#1}}
\def\Eqref#1{Equation~\ref{#1}}
\def\1{\bm{1}}
\def\va{{\bm{a}}}
\def\vb{{\bm{b}}}
\def\ve{{\bm{e}}}
\def\evalpha{{\alpha}}
\def\eva{{a}}
\def\evb{{b}}
\def\eve{{e}}
\def\mA{{\bm{A}}}
\def\mC{{\bm{C}}}
\DeclareMathAlphabet{\mathsfit}{\encodingdefault}{\sfdefault}{m}{sl}
\SetMathAlphabet{\mathsfit}{bold}{\encodingdefault}{\sfdefault}{bx}{n}
\def\emA{{A}}
\def\emC{{C}}
\begin{document}

% If your paper is accepted and the title of your paper is very long,
% the style will print as headings an error message. Use the following
% command to supply a shorter title of your paper so that it can be
% used as headings.
%
%\runningtitle{I use this title instead because the last one was very long}

% If your paper is accepted and the number of authors is large, the
% style will print as headings an error message. Use the following
% command to supply a shorter version of the authors names so that
% they can be used as headings (for example, use only the surnames)
%
%\runningauthor{Surname 1, Surname 2, Surname 3, ...., Surname n}

% Define Theorem, Lemma, Proposition, Corollary, Definition, and Remark environments
\newtheorem{theorem}{Theorem}[section]
\newtheorem{lemma}[theorem]{Lemma}
\newtheorem{proposition}{Proposition}
\newtheorem{corollary}[theorem]{Corollary}
\newtheorem{definition}{Definition}
\newtheorem{remark}[theorem]{Remark}
\newtheorem{example}[theorem]{Example}

\def\clater{\textcolor{blue}{TODO( Cite )}}
\newcommand{\todo}[1]{\textcolor{orange}{TODO: #1}}
\twocolumn[

\aistatstitle{Multimodal Learning with Uncertainty Quantification based on Discounted Belief Fusion}

\aistatsauthor{ Grigor Bezirganyan \And Sana Sellami \And  Laure Berti-\'{E}quille \And S\'{e}bastien Fournier}

\aistatsaddress{Aix-Marseille Univ, LIS \And Aix-Marseille Univ, LIS \And IRD, ESPACE-DEV \And Aix-Marseille Univ, LIS} ]

\begin{abstract}
Multimodal AI models are increasingly used in fields like healthcare, finance, and autonomous driving, where information is drawn from multiple sources or modalities such as images, texts, audios, videos. However, effectively managing uncertainty—arising from noise, insufficient evidence, or conflicts between modalities—is crucial for reliable decision-making. Current uncertainty-aware machine learning methods leveraging, for example, evidence averaging, or evidence accumulation underestimate uncertainties in high-conflict scenarios. Moreover, the state-of-the-art evidence averaging strategy is not order invariant and fails to scale to multiple modalities. To address these challenges, we propose a novel multimodal learning method with order-invariant evidence fusion and introduce a conflict-based discounting mechanism that reallocates uncertain mass when unreliable modalities are detected. We provide both theoretical analysis and experimental validation, demonstrating that unlike the previous work, the proposed approach effectively distinguishes between conflicting and non-conflicting samples based on the provided uncertainty estimates, and outperforms the previous models in uncertainty-based conflict detection.
\end{abstract}

\section{INTRODUCTION}

Recently, the use of Artificial Intelligence (AI) has surged dramatically for both autonomous and assisted decision-making. In fields such as healthcare, finance, and autonomous driving, information is often distributed across multiple modalities or views\footnote{The terms modality and view will be used interchangeably in this paper}. To effectively leverage this diverse information, multimodal AI models have been developed \citep{li2024review,xiao2020multimodal,lee2020multimodal,chergui2024mixmas,krones2025review}. However, deep neural networks are prone to making wrong decisions with very high confidence \citep{pmlr-v70-guo17a, Hein_2019_CVPR}. In safety-critical areas, such as healthcare or autonomous driving, this kind of overconfidence can lead to catastrophic results. Hence, when deploying these models in safety-critical areas, it is essential to understand the true uncertainty of the predictions.

Uncertainty in deep learning can appear from multiple sources. Firstly, data can have an inherent amount of randomness (aleatoric uncertainty) that can affect the quality of the model's decisions. Secondly, there can be not enough evidence in the data to make a confident prediction (epistemic uncertainty) \citep{kiureghian_aleatoryepistemic_2009}. Another source of uncertainty that is unique to multimodal networks is the conflict between modalities. We define conflict as confident disagreement between several opinions. Conflicts arise from having unreliable sources of information \citep{martin2019conflict}. In case of multimodal data, the uncertainties may also come from misaligned modalities or malicious injection of incorrect data in the modalities.  

While some multimodal approaches \citep{marcos2013clustering, hou2020fast} opt to remove conflicting samples from the dataset, in many scenarios, it is more beneficial to make decisions based on existing evidences while providing relatively higher uncertainty estimates. \citet{xu2024reliable} has proposed an approach for multi-view conflicting information fusion by evidence averaging (RCML). The authors demonstrate the advantage of their fusion method by showcasing superior classification accuracy through experimental validation under both conflicting and non-conflicting settings. However, \textbf{the evidence averaging operation is defined only for two views and is not associative} \citep{josang2016subjective}, which limits its application to more than 2 views or modalities, making it non-order-invariant. Another shortcoming of evidence averaging is that \textbf{under high levels of conflict, the uncertainty does not increase}, which means that a decision based on two conflictive sources is as reliable as the one based on two non-conflictive sources. This is counter-intuitive, as for example, a decision based on conflicting opinions, shall be less reliable than the decision based on total agreement.  

To address these issues, we propose a novel uncertainty aware multimodal fusion approach: Discounting Belief Fusion (DBF), which utilizes conflictive discounting of unreliable modalities, is order-invariant and can effectively scale to multiple modalities. Additionally, the proposed approach assigns higher uncertainties to conflictive decisions, which improves the reliability assessment of the final decision. We provide an experimental validation of our proposed approach, and open-source the source code in a public repository\footnote{\href{https://github.com/bezirganyan/DBF_uncertainty}{https://github.com/bezirganyan/DBF\_uncertainty}}.

This paper is structured as follows. Section \ref{sec:rel_work} presents the related work in unimodal and multimodal uncertainty quantification, section \ref{sec:method} presents our proposed approach of discounted conflictive fusion, section \ref{sec:exp} presents our experiments and their results, and we conclude the paper with section \ref{sec:concl}.
\section{RELATED WORK}
\label{sec:rel_work}
\subsection{Uncertainty Quantification in Deep Learning}

\citet{gawlikowski2023survey} classified uncertainty quantification (UQ) techniques in deep learning into (1) Bayesian Methods \citep{gal2016dropout, wilson2020bayesian}, (2) Ensemble Methods \citep{lakshminarayanan2017simple}, (3) Single Network Deterministic methods \citep{malinin2018predictive, sensoy2018evidential}, and (4) Test-time augmentation methods \citep{lyzhov2020greedy}. Bayesian methods model uncertainty by placing distributions over weights, which provides better-calibrated predictions and robustness to overfitting, especially in low-data situations. Nevertheless, they often increase the training and inference complexity, require approximations and can be sensitive to the choice of priors \citep{ovadia2019can}. Deep ensembles provide reliable and well-calibrated uncertainty estimates by combining predictions from multiple models, capturing both epistemic and aleatoric uncertainties effectively. However, training and storing multiple independent models in a deep ensemble can be significantly more computationally expensive in terms of memory, training and inference times \citep{lakshminarayanan2017simple}. Test-time Augmentation naturally introduces variability through multiple input transformations, which can provide a straightforward estimate of prediction uncertainty by measuring the variability in outputs \citep{ayhan2018test}. However they significantly increase inference time, do not estimate the epistemic uncertainty and are highly dependent on the augmentation strategy. Single Network Deterministic methods are computationally very efficient and require minimal changes to the network architecture. However, they can struggle to capture the full range of epistemic uncertainty \citep{ulmer2023prior}. 

Evidential Deep Learning (EDL) \citep{sensoy2018evidential}, a single-network deterministic method, has recently gained popularity due to its easy adaptation to existing networks and its low training and testing times \citep{gao2024comprehensive}. EDL replaces the categorical distribution of the last layers of neural networks with a Dirichlet distribution and uses subjective logic \citep{josang2016subjective} to optimize the network parameters. 
Thanks to its ease of adaptation to existing architectures, and computational efficiency, it was also adapted to multi-modal models, which will be described in the next subsection. 

\subsection{Uncertainty Quantification in Multimodal and Multi-view Learning}

While there are many different approaches for uni-modal uncertainty quantification, the multimodal UQ, where we can also have misalignment and conflict between modalities, is relatively less studied. One of the pioneering works, Trusted Multi-view Classification (TMC) \citep{han2021trusted} employs Evidential Deep Learning and uses Dempster's combination (or belief constraint fusion in subjective logic) to combine decisions from multiple modalities. However, as shown by \citet{Zadeh1979}, Dempster's combination rule can yield counter-intuitive results under a high degree of conflict between information sources. \citet{liu_trustedmultiview_2022} address this issue by using evidence accumulation (or cumulative belief fusion in subjective logic) instead of Dempster's combination rule. This type of fusion is suitable for fusing independent information sources, where each source contributes new evidence and thus can reduce uncertainty \citep{josang2016subjective}.

In these approaches, incorporating a new modality decreases uncertainty regardless of the conflict between modalities, while a decision based on conflicting view can be less reliable in practice. To alleviate this issue, \citet{xu2024reliable} propose Evidential Conflictive Multiview Learning (ECML), which introduces an average pooling operation (averaging belief fusion in subjective logic). This method reduces the combined uncertainty if the new view has lower uncertainty and increases it if the new view has higher uncertainty. For two views, the combined uncertainty is the harmonic mean of the individual view uncertainties. However, it is possible to have two conflicting views with low uncertainty, resulting in low combined uncertainty when higher uncertainty would be expected. Additionally, the proposed fusion operation is not associative \citep{josang2016subjective}, making the result of fusing more than two modalities highly dependent on the order of fusion. \citet{huang2025deep} proposed a learnable discounting factor for modality reliability, but it uses a fixed value per modality and class, making it less adaptable to discrepancies between training and deployment. In contrast, our method computes reliability per sample, dynamically adjusting to misalignments and noise that may arise in real-world settings. Similar to these works, we also base our approach on evidential deep learning methods due to their computational efficiency and ease of adaptation. Nevertheless, we address shortcomings of the current approaches by presenting an order-invariant and scalable approach that effectively distinguishes between conflictive and non-conflictive samples based on uncertainty estimates. 

\citet{jung2022uncertainty} and \citet{jung2024beyond} propose alternative multimodal UQ approaches based on Gaussian processes and neural processes, respectively. While these methods perform well, the Multimodal Gaussian Process is computationally expensive due to its non-parametric nature. The Multimodal Neural Process is relatively faster; however, its results are highly dependent on the chosen context set, and there are currently no theoretically guaranteed methods to obtain an optimal context set.

\section{METHODOLOGY}
\label{sec:method}

In this section we will first provide a background on subjective logic, motivate and present the generalized belief averaging fusion operator \citep{josang2017multi}, and finally present our method called Discounted Belief Fusion. 

\subsection{Subjective Logic and Uncertainty}

Evidential Deep Learning uses Subjective Logic to model classification uncertainty by learning evidence scores for each class. Subjective Logic \citep{josang2016subjective} is the extension of Dempster–Shafer's theory of belief functions \citep{dempster1968generalization, shafer1976mathematical} to allow a bijective mapping between a \textit{subjective opinion} and the parameters of Dirichlet distribution. Specifically, for a classification problem of $K$ classes, the subjective opinion $\boldsymbol{\omega}$ is defined as a triplet $\boldsymbol{\omega} = (\vb, u, \va)$, where the $\vb = (\evb_1, \dots, \evb_K)^T$ represents the \textit{belief mass} for each class, $u$ represents the \textit{uncertainty mass} and $\va = (\eva_1, \dots, \eva_K)^T$ represents the \textit{base rate} or prior probability distribution over the classes. In scenarios where no prior knowledge about the classes is available (as is typical in most deep learning applications), a uniform base rate can be assumed. All these terms are strictly non-negative and are less than or equal to one. Unlike probability theory, the belief masses do not sum up to one, however the additivity property dictates that $\sum_{i=1}^K \evb_i + u = 1$. The base rates can usually be set to $1/K$ in case of absence of a prior belief on the classes. The \textit{projected probability distribution} for class $k$ can be obtained with $P_k = \evb_k + \eva_ku, \forall k \in [1, \dots, K]$. 

As mentioned,  subjective logic provides a bijective mapping between the multinomial opinions and the parameters of Dirichlet distribution. The Dirichlet distribution with parameters $\boldsymbol{\alpha} = (\evalpha_1, \dots, \evalpha_K)^T$ is defined by
\begin{equation}
    D(\mathbf{p} \mid \boldsymbol{\alpha})=\left\{\begin{array}{cc}
\frac{1}{B(\boldsymbol{\alpha})} \prod_{k=1}^K p_k^{\alpha_k-1}, & \text { for } \mathbf{p} \in \mathcal{S}_K \\
0, & \text { otherwise, }
\end{array}\right.
\end{equation}
where $\mathbf{p}$ is the probability of each class, $B$ is the multivariate beta function and $\mathcal{S}_K$ is the $K$ dimensional unit simplex. The Dirichlet distribution is a conjugate prior for categorical distribution, meaning that a prior belief encoded by a Dirichlet distribution can be continuously updated by using data with categorical likelihood to obtain updated Dirichlet posterior distribution.  The Dirichlet distribution is a distribution over categorical distributions on $K-1$ dimensional simplex, and can be used to understand the uncertainties present in the data and the model. 

The bijective mapping between the multinomial opinion and Dirichlet distribution can be given with: 
\begin{equation}
b_k=\frac{e_k}{S}=\frac{\alpha_k-1}{S}, u=\frac{K}{S},
\end{equation}
where $S=\sum_{k=1}^K\left(e_k+1\right)=\sum_{k=1}^K \alpha_k$ is called a Dirichlet strength and $\ve = (\eve_1, \dots, \eve_K)^T = \boldsymbol{\alpha} - 1$ is the accumulated evidence that shows the support for each class. Having these parameters, the probability of each class can be computed by the expectation of the Dirichlet distribution with $p_k = \alpha_k / S$. In the next subsections we will adapt and use different fusion strategies from subjective logic to fuse the evidences collected from various modalities. 

\subsection{Generalized Belief Averaging}
\label{subsec:gbaf}
In this subsection, we will theoretically show the shortcoming of the ECML approach \citep{xu2024reliable} of using belief averaging fusion and motivate the use of the generalized belief averaging. 

Let the dataset be defined as ${(x_n^v, y_n)}_{n=1}^{N}, \ \ v \in [1, \dots, V]$, where $x_n^v \in X$ represents the $n$-th data sample from the $v$-th modality, $y_n \in [0, 1]^K$ is the corresponding one-hot encoded label vector, $V$ is the number of modalities, $N$ is the total number of samples, and $K$ is the number of classes. Having $V$ neural networks $f^v(x^v)$ for each view, we obtain the view-specific multinomial opinions $\boldsymbol{\omega}^v = f^v(x^v)$, and the goal is to fuse these opinions into one final opinion: $\boldsymbol{\omega}^{\diamond V} = \boldsymbol{\omega}^1 \diamond \dots \diamond \boldsymbol{\omega}^V$.  

The Evidential Conflictive Multi-view Learning (ECML) \citep{xu2024reliable} uses belief averaging fusion \citep{josang2016subjective} for fusing information from multiple views. The averaging operation for two views is defined as: 
\begin{equation}
\begin{gathered}
\boldsymbol{\omega}^{1 \diamond 2}=\boldsymbol{\omega}^1 \diamond \boldsymbol{\omega}^2=\left(\boldsymbol{b}^{1 \diamond 2}, u^{1 \diamond 2}, \boldsymbol{a}^{1 \diamond 2}\right) \\
b_k^{1 \diamond 2}=\frac{b_k^1 u^2+b_k^2 u^1}{u^1+u^2}, \ \  
u^{1 \diamond 2}=\frac{2 u^1 u^2}{u^1+u^2}, \\  \boldsymbol{a}_k^{A \diamond B}=\frac{\boldsymbol{a}_k^A+\boldsymbol{a}_k^B}{2}
\end{gathered}
\end{equation}
In terms of evidences, the fusion can be defined as 
\begin{equation}
\boldsymbol{e}^{1 \diamond 2}=\frac{1}{2}\left(\boldsymbol{e}^1+\boldsymbol{e}^2\right).
\end{equation}
For multiple views, the fusion can be performed with 
$\boldsymbol{\omega}^{\diamond V}=\boldsymbol{\omega}^1 \diamond \boldsymbol{\omega}^2 \diamond \ldots \diamond \boldsymbol{\omega}^V$.
This averaging belief fusion operator, however, is not associative \citep{josang2016subjective}.

\begin{proposition}
\label{prop:invariance}
    When using an averaging belief fusion operator, the evidence associated with previously fused terms is reduced by a factor of two each time a new term is incorporated, relative to the evidence of the newly fused term.
\end{proposition}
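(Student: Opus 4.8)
The plan is to reduce everything to the evidence-space form of the averaging operator, $\boldsymbol{e}^{1\diamond 2} = \tfrac12(\boldsymbol{e}^1 + \boldsymbol{e}^2)$, and to track how the coefficient of each constituent evidence vector evolves as views are fused sequentially. First I would fix the left-associated fusion order $\boldsymbol{\omega}^{\diamond V} = (\cdots((\boldsymbol{\omega}^1 \diamond \boldsymbol{\omega}^2)\diamond \boldsymbol{\omega}^3)\cdots)\diamond \boldsymbol{\omega}^V$ and write the one-step recursion in evidence space. Because the two-view rule averages, incorporating the $(m+1)$-th view gives
\begin{equation}
\boldsymbol{e}^{\diamond(m+1)} = \tfrac12\left(\boldsymbol{e}^{\diamond m} + \boldsymbol{e}^{m+1}\right) = \tfrac12\,\boldsymbol{e}^{\diamond m} + \tfrac12\,\boldsymbol{e}^{m+1}.
\end{equation}
This single identity already contains the whole phenomenon: the freshly added term enters with coefficient $\tfrac12$, whereas the entire accumulated evidence $\boldsymbol{e}^{\diamond m}$ — which by construction contains every previously fused term — is uniformly rescaled by $\tfrac12$. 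Hence the coefficient of each previously fused $\boldsymbol{e}^{v}$ ($v \le m$) is exactly halved relative to that of the newly fused $\boldsymbol{e}^{m+1}$, which is precisely the asserted factor-of-two reduction.

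To make the cumulative effect explicit I would then unroll the recursion by induction on $V$, obtaining the closed form
\begin{equation}
\boldsymbol{e}^{\diamond V} = \frac{1}{2^{V-1}}\,\boldsymbol{e}^1 + \sum_{v=2}^{V}\frac{1}{2^{\,V-v+1}}\,\boldsymbol{e}^{v}.
\end{equation}
The base case $V=2$ is the definition, and the inductive step is a direct substitution of the hypothesis into the one-step recursion above. Reading off the coefficients confirms the geometric decay: the view fused at stage $v$ carries weight $2^{-(V-v+1)}$, so each earlier view contributes half as much as the one fused immediately after it, and the first two views are the most heavily discounted.

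The computation itself is routine, so the only real obstacle is interpretive rather than technical: the statement ``reduced by a factor of two relative to the newly fused term'' must be pinned to the comparison of coefficients within a single fusion step, since in absolute terms both $\boldsymbol{e}^{\diamond m}$ and $\boldsymbol{e}^{m+1}$ acquire the same factor $\tfrac12$. I would therefore phrase the conclusion in terms of the \emph{ratio} of coefficients, which is unambiguous and makes clear why order invariance fails — an equitable (order-invariant) average over $V$ views would assign every view the common weight $1/V$, whereas sequential averaging assigns exponentially unequal weights, privileging the most recently fused modality.
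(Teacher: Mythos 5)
Your proposal is correct and follows essentially the same route as the paper: unrolling the left-associated pairwise average in evidence space to obtain the closed form with coefficient $\tfrac{1}{2^{V-1}}$ on $\ve^1$ and $\tfrac{1}{2^{V-v+1}}$ on each $\ve^v$ for $v\ge 2$, which matches the paper's expression exactly. Your added induction scaffolding and the clarifying remark that the ``factor of two'' refers to the ratio of coefficients within a single fusion step are sound refinements of the same argument rather than a different approach.
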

\begin{proof}
The evidence after sequentially fusing $V$ terms using the averaging belief fusion operator, denoted as $\ve^{\diamond V}$, can be computed as follows:
\begin{equation}
\begin{aligned}
\ve^{\diamond V} &= \left( \dots \left( \left(\ve^1+\ve^2\right) \frac{1}{2} + \ve^3 \right) \frac{1}{2} + \dots + \ve^V \right) \frac{1}{2} \\
&= \frac{1}{2^{V-1}} \ve^1 + \frac{1}{2^{V-1}} \ve^2 + \frac{1}{2^{V-2}} \ve^3 + \dots + \frac{1}{2} \ve^V.
\end{aligned}
\end{equation}
The recursive fusion process is initiated by fusing the first two terms, $\ve^1$ and $\ve^2$, with equal weights $\frac{1}{2}$. Each subsequent term $\ve^v$ ($v > 2$) is then added and averaged iteratively. This results in the evidence weight of each previously fused term being halved with every new fusion step, leading to the final expression where the evidence of each term $\ve^v, v \neq 1$ is scaled by $\frac{1}{2^{V-v+1}}$, and $\ve^1$ is scaled by $\frac{1}{2^{V-1}}$. \qedhere
\end{proof}

As an example of Proposition~\ref{prop:invariance}, consider a scenario with three modalities. For a specific class \( k \), the first modality provides an evidence value of 3, the second modality an evidence of 5, and the third modality an evidence of 10. When fusing these evidences using BAF, we compute:

\begin{equation*}
    e_k^{1\diamond2\diamond3} = \frac{\left(\frac{3 + 5}{2}\right) + 10}{2} = \frac{3 + 5}{4} + \frac{10}{2} = 7
\end{equation*}

Here, the evidence values from the first two modalities (3 and 5) are divided by 4, while the evidence value from the third modality (10) is divided by 2. Consequently, the contributions from the first two modalities are weighted only half as much as the third modality. This imbalance leads to a lack of commutativity in the fusion process, meaning that: $e_k^{1\diamond2\diamond3} \neq e_k^{3\diamond2\diamond1}$. To illustrate, if we change the fusion order:
\begin{equation*}
e_k^{3\diamond2\diamond1} = \frac{\left(\frac{10 + 5}{2}\right) + 3}{2} = 5.25
\end{equation*}
This discrepancy is counter-intuitive and significantly hinders the scalability of the algorithm.

To make the fusion process scalable and efficient for many sources, we suggest to use the generalized version of the belief averaging fusion for multiple sources \citep{josang2017multi}, with:
\begin{equation}
\begin{gathered}
\vb^{\diamond V}=\frac{\sum_{v =1} ^V \left(\vb^v \prod_{i \neq v} u^{i}\right)}{\sum_{v =1} ^V \left( \prod_{i \neq v} u^{i}\right)}, \\
u^{\diamond V}=\frac{V \prod_{v=1}^V u^v}{\sum_{v=1}^V \left(\prod_{i \neq v} u^{i}\right)}, \ \
\va^{\diamond V} = \frac{\sum_{v=1}^V \va^v}{V}. 
\end{gathered}
\label{eq:generalized_averaging}
\end{equation}
 The generalized belief averaging fusion in terms of evidences will then be:
\begin{equation}
    \ve ^{\diamond V} = \frac{\sum_{v=1}^V \ve^v}{V}. 
\label{eq:evidence_averaging}
\end{equation}
As we can see, unlike the standard pairwise averaging approach, the generalized belief averaging fusion for multiple sources simultaneously incorporates all $V$ sources, ensuring a uniform weighting scheme. 

\begin{figure*}
    \centering
    \includegraphics[width=1\linewidth]{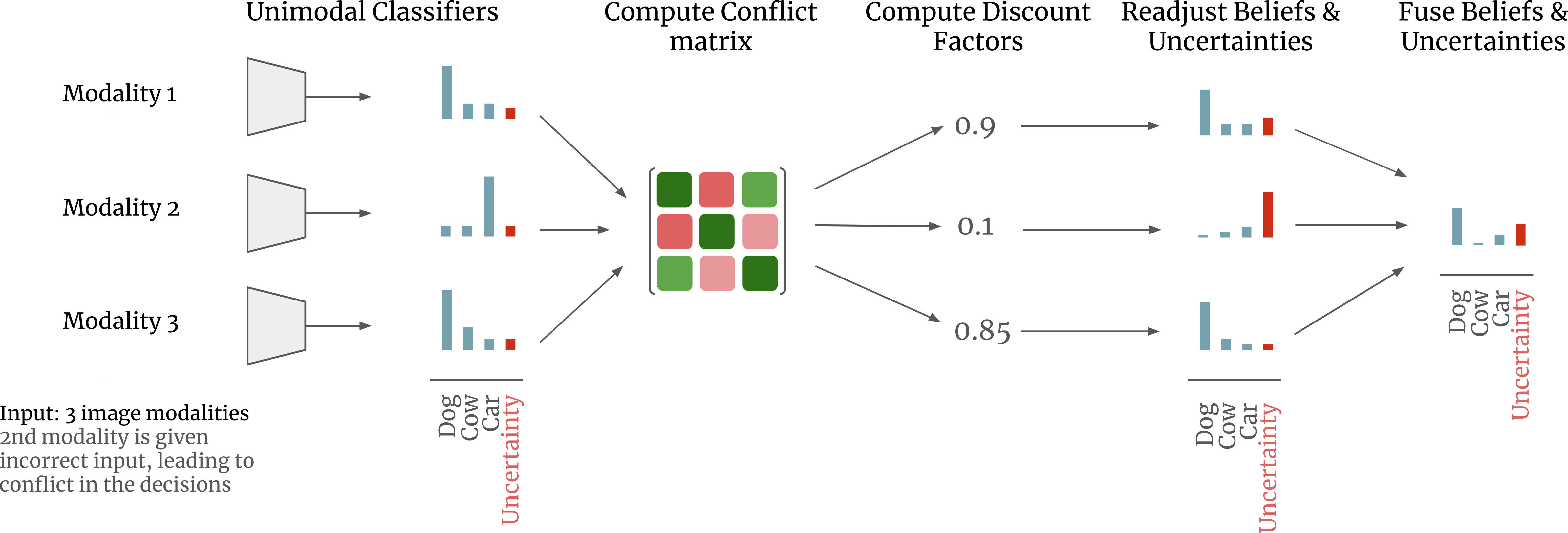}
    \caption{The general pipeline of the Discounted Belief Fusion: First, subjective opinions are being formed for each modality using unimodal classifiers. Then conflict matrix is computed, which is then used to compute discounting factors for each modality. Finally, the beliefs of each modality are being adjusted using the discounting factors and are fused together using generalized belief averaging.}
    \label{fig:dbf-pipeline}
\end{figure*}

\subsection{Discounted Belief Fusion}
In this subsection, we demonstrate that the fusion approach described earlier does not adequately address the increased uncertainty in conflicting samples and propose the Discounted Belief Fusion (DBF) solution to mitigate this issue. The general pipeline of the DBF approach is shown in Figure \ref{fig:dbf-pipeline}.

\begin{proposition}
\label{prop:harmonic_mean}
The uncertainty estimate $u^{\diamond V}$ in \eqref{eq:generalized_averaging} is the harmonic mean of the individual uncertainties $u^1, u^2, \dots, u^V$.
\end{proposition}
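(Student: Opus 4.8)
The plan is to verify directly that the closed form for $u^{\diamond V}$ given in \eqref{eq:generalized_averaging} coincides with the definition of the harmonic mean of the scalars $u^1, \dots, u^V$, namely
\[
H = \frac{V}{\sum_{v=1}^V (u^v)^{-1}}.
\]
Since each uncertainty mass satisfies $0 < u^v \le 1$ (the upper bound follows from the additivity property $\sum_k b_k^v + u^v = 1$, and the strict positivity is the mild non-degeneracy assumption needed for the reciprocals to exist), every quantity appearing below is well-defined and positive.

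First I would start from the definition of $H$ and clear the reciprocals by multiplying its numerator and denominator by the common product $\prod_{v=1}^V u^v$. The crucial algebraic identity is that, for each fixed index $w$,
\[
\frac{1}{u^w}\prod_{v=1}^V u^v = \prod_{i \neq w} u^i,
\]
that is, dividing the full product by a single factor leaves exactly the product over the remaining indices. Substituting this identity termwise into the cleared denominator converts $\bigl(\sum_{w=1}^V (u^w)^{-1}\bigr)\prod_{v=1}^V u^v$ into $\sum_{w=1}^V \prod_{i \neq w} u^i$, while the numerator becomes $V \prod_{v=1}^V u^v$. Comparing the resulting quotient with the expression for $u^{\diamond V}$ in \eqref{eq:generalized_averaging} yields term-by-term agreement, which completes the argument.

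There is essentially no serious obstacle here: the proof is a one-line manipulation once the ``product over all-but-one'' identity is isolated. The only points requiring mild care are the bookkeeping of the summation and product indices, so that the index of the omitted factor is correctly matched to the outer summation index, and the implicit assumption $u^v > 0$ for all $v$, which guarantees that both the reciprocals in $H$ and the final quotient are meaningful.
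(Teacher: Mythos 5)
Your argument is correct and is essentially the same as the paper's: both rest on the identity $\prod_{i \neq v} u^i = \bigl(\prod_{i=1}^V u^i\bigr)/u^v$, with the paper simplifying the fused expression down to $V/\sum_v (u^v)^{-1}$ and you running the same cancellation in the reverse direction from the harmonic-mean definition. Your explicit remark that $u^v > 0$ is needed for the reciprocals is a small point of care the paper leaves implicit.
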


\begin{proof}
From \eqref{eq:generalized_averaging}, the uncertainty after fusion is given by:
\begin{equation}
u^{\diamond V} = \frac{V \prod_{v=1}^V u^v}{\sum_{v=1}^V \left(\prod_{i \neq v} u^{i}\right)}.
\end{equation}
Next, observe that the denominator can be rewritten as:
\begin{equation}
\sum_{v=1}^V \left(\prod_{i \neq v} u^{i}\right) = \sum_{v=1}^V \frac{\prod_{i=1}^V u^i}{u^v}.
\end{equation}
Substituting into the expression for $u^{ \diamond V}$, we obtain:
\begin{equation}
u^{\diamond V} = \frac{V \prod_{v=1}^V u^v}{\sum_{v=1}^V \frac{\prod_{i=1}^V u^i}{u^v}} = \frac{V}{\sum_{v=1}^V \frac{1}{u^v}}.
\end{equation}
The expression $\frac{V}{\sum_{v=1}^V \frac{1}{u^v}}$ is the definition of the harmonic mean of the individual uncertainties $u^1, \dots, u^V$. 
\end{proof}

Proposition \ref{prop:harmonic_mean} suggests that the fused uncertainty, as the harmonic mean of individual uncertainties, remains low even in cases of conflict between modalities with low uncertainty. However, in an ideal scenario, conflicting views should result in increased fused uncertainty, reflecting the difficulty and reduced reliability of making a decision in such cases.

\citet{shafer1976mathematical} introduced a discounting operation for combining beliefs in belief theory. The discounting operation is meant to reduce the belief masses of unreliable sources by a constant, and assign the discounted masses to the uncertainty. Since there exists a bijective mapping between belief theory and subjective logic under uniform base rates, we can translate the discounting operation to subjective logic, which will become: 
\begin{equation}
    \vb' = \eta \vb, \ \ \ \ u' = 1 - \eta + \eta u. 
\end{equation}
The $\eta$ represents the degree of the reliability of the evidence. When $\eta = 0$, the updated uncertainty $u'$ becomes 1, indicating complete uncertainty. Conversely, when $\eta = 1$, the uncertainty remains unchanged from its original value, $u$. For intermediate values of $\eta$, the uncertainty $u'$ varies linearly between these two extremes. In our case, we assume that the reliability of the evidences are connected to the degree of conflict between them. In case of 2 views with high conflict we will regard both evidences as unreliable, since based on existing evidence it is not possible to correctly assess which source is truly the reliable one. If we have multiple views,  we want to count how many other views each view conflicts with. If a view is in conflict with many views, it has to be regarded as more unreliable, and has to be discounted more, than views with less conflict. To assess the amount of conflict between the evidences, we will use the Degree of Conflict measure from subjective logic \citep{josang2016subjective}. 

\begin{figure*}
    \centering
    \includegraphics[width=1\linewidth]{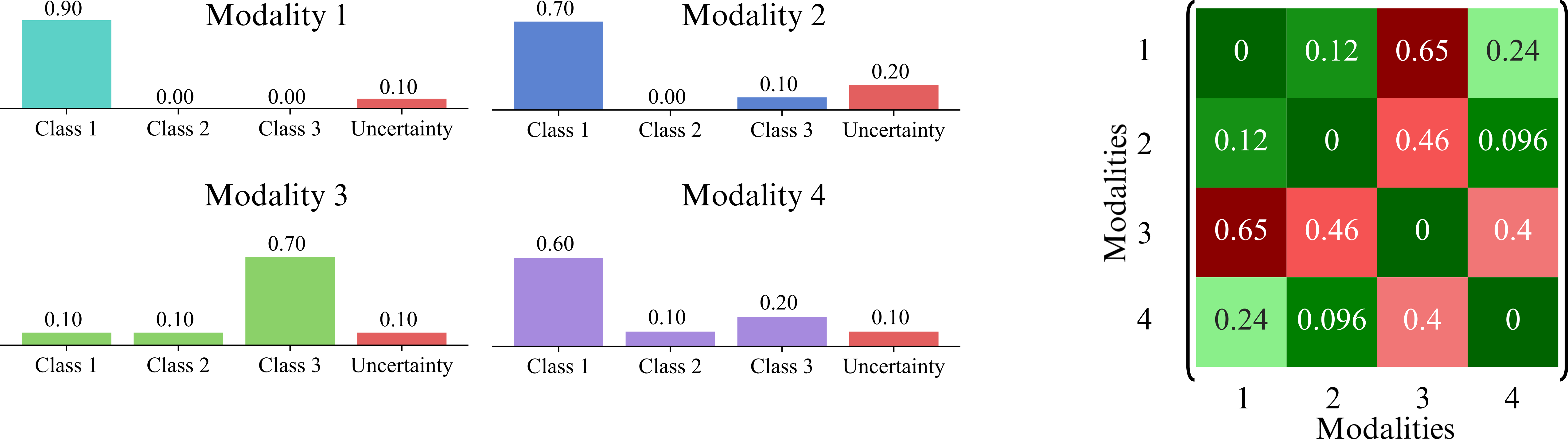}
    \caption{The Conflict Matrix $\mC$ for 4 opinions. }
    \label{fig:conflict-matrix}
\end{figure*}
\begin{figure}
    \centering
    \includegraphics[width=0.8\linewidth]{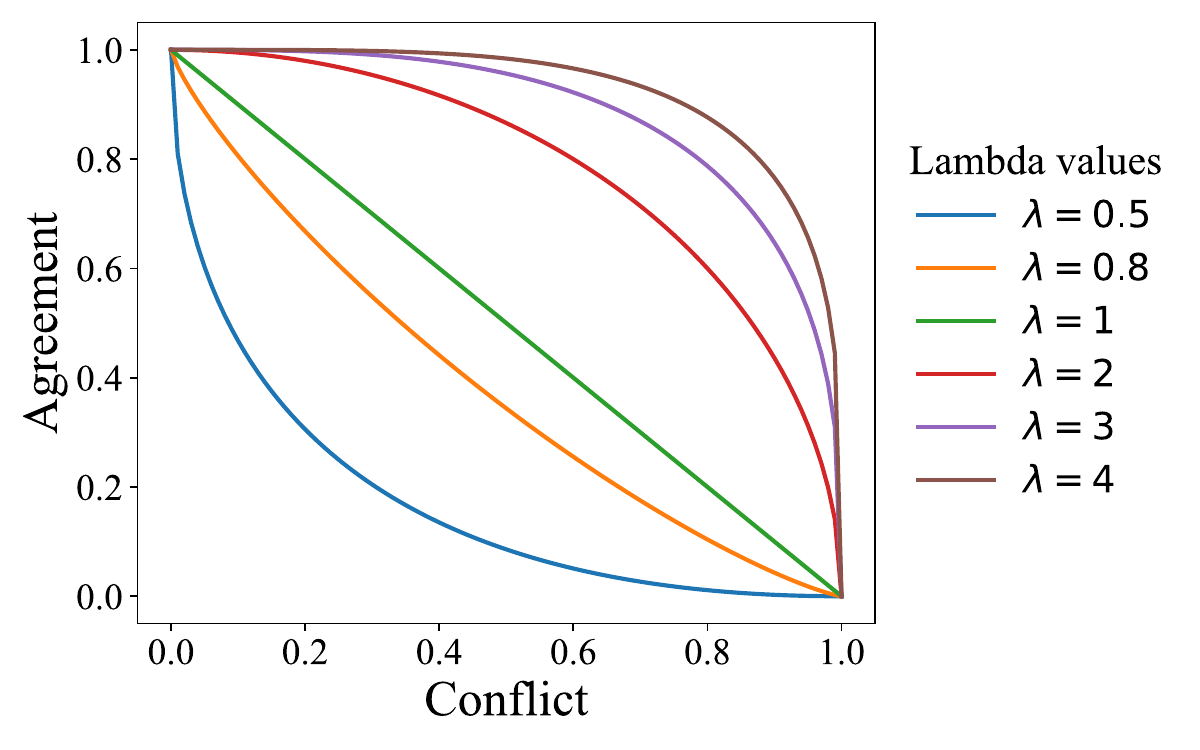}
    \caption{Conflict versus Agreement according to \eqref{eq:updated_discounting_factor}. Higher values of $\lambda$ reduce discounting for lower levels of conflict.} 
    \label{fig:conflict_vs_agreement}
\end{figure}

\begin{definition}[Degree of Conflict]
The degree of conflict $\operatorname{DC}$ between two subjective opinions $\boldsymbol{\omega}^1$ and $\boldsymbol{\omega}^2$ is defined as:
\begin{equation}
\begin{aligned}
\operatorname{PD}\left(\boldsymbol{\omega}^1, \boldsymbol{\omega}^2\right) &= \frac{\sum_{k=1}^K\left|p^1_k - p^2_k\right|}{2}, \\
\operatorname{CC}\left(\boldsymbol{\omega}^1, \boldsymbol{\omega}^2\right) &= \left(1-u^1\right)\left(1-u^2\right), \\
\operatorname{DC}\left(\boldsymbol{\omega}^1, \boldsymbol{\omega}^2\right) &= \operatorname{PD}\left(\boldsymbol{\omega}^1, \boldsymbol{\omega}^2\right) \cdot \operatorname{CC}\left(\boldsymbol{\omega}^1, \boldsymbol{\omega}^2\right),
\end{aligned}
\label{eq:degree_of_conflict}
\end{equation}
where $p_k^v$ denotes the projected probability for class $k$ under opinion $v$, and $u^v$ represents the uncertainty of opinion $v$. The function $\operatorname{PD}\left(\boldsymbol{\omega}^1, \boldsymbol{\omega}^2\right)$ is the projected distance between two opinions, which measures the dissimilarity between the projected probabilities of the opinions and equals zero only when both sets of probabilities are identical. The function $\operatorname{CC}\left(\boldsymbol{\omega}^1, \boldsymbol{\omega}^2\right)$ is the conjunctive certainty between the two opinions, which takes a value of one when both opinions are fully confident (i.e., $u^1 = u^2 = 0$) and approaches zero as the uncertainty of one or both opinions increases.
\end{definition}

Analyzing the Degree of Conflict, we observe that it is significantly high only when two conditions are met: (1) both opinions exhibit low uncertainty, and (2) the projected probabilities of these opinions differ substantially, indicating a high distance between them. \textbf{Consequently, uncertain modalities cannot contribute to high conflict, as conflict arises predominantly between opinions that are both confident and significantly divergent.}

\Eqref{eq:degree_of_conflict} defines the degree of conflict for two opinions. We will use this metric to understand how conflicting an opinion is with the other opinions. For that, we compute the pairwise degree of conflicts between each pair of opinions, and store them in a matrix $\mC$ (see \Figref{fig:conflict-matrix} as an example). Then, we can compute an agreement matrix with $\mA = 1 - \mC$, and we can compute the discounting factor $\eta^v$ as the product of the row elements of the agreement matrix with:
\begin{equation}
\begin{aligned}
    \emC_{ij} = \operatorname{DC}(\boldsymbol{\omega}^i, \boldsymbol{\omega}^j), \ \ \mA = 1 - \mC, \\ 
\eta^v = \prod_{i=1}^{V} \emA_{vi}, \ \ \forall v \in [1, \dots V].
\label{eq:discounting_computation}
\end{aligned}
\end{equation}
Looking back to the example in \Figref{fig:conflict-matrix}, we can see that by using \eqref{eq:discounting_computation} to compute the discounting factor, the opinions that are in conflict with many other views (opinion 3 in the example), will be discounted much more than the other opinions.
 We can compute the updated belief masses and uncertainties with \eqref{eq:discounting_computation}, and fuse the results with generalized belief averaging (\eqref{eq:generalized_averaging}).

However, due to the product in the \eqref{eq:discounting_computation}, having multiple views can cause the discounting factor get too small, and hence it can increase the uncertainty substantially, even if there is very little conflict between the modalities. To overcome this issue, following the work of \citet{martin2019conflict} we modify the agreement matrix calculation \eqref{eq:discounting_computation} to be: 
\begin{equation}
    \mA = (1 - \mC ^ \lambda )^{1/\lambda},
    \label{eq:updated_discounting_factor}
\end{equation}
where $\lambda$ is a hyperparameter controlling how strong the discounting shall be relative to conflict. In \Figref{fig:conflict_vs_agreement} the relation between conflict and agreement under different values of $\lambda$ are shown. 

To illustrate the differences between the fusion functions, we will use the famous example of \citet{Zadeh1979}, adapted to a multimodal setting. Here, two conflicting modalities allocate belief masses as follows: the first source assigns $0.99$ to class 1 and $0.01$ to class 3, while the second assigns $0.99$ to class 2 and $0.01$ to class 3. As we can see in Table \ref{tab:fusion_comparison}, despite class 3 being unlikely for both, Dempster’s rule (Belief Constrained Fusion) combines them into class 3 with full confidence, ignoring the conflict. The Cumulative Belief Fusion and the Averaging Belief Fusion provide better results in this case, by splitting most of the belief mass between the classes 1 and 2, however, the uncertainty is still 0, which means they are still confident in the decision.  In contrast, discounting fusion shifts most belief mass to uncertainty, indicating the unreliability of the evidences acquired, while the parameter $\lambda$ in \eqref{eq:updated_discounting_factor} controls the strength of discounting based on conflict. 

\begin{table}
\caption{Beliefs and Uncertainty values for fusing two modalities' opinions with Belief Constraint Fusion (BCF), Cumulative Belief Fusion (CMF), Belief Averaging Fusion (BAF), and our proposed Discounting Belief Fusion (DBF).}
\centering
\begin{tabular}{@{}lcccc@{}}
\toprule
Fusion Method    & \textbf{$\evb_1$} & \textbf{$\evb_2$} & \textbf{$\evb_3$} & \textbf{$u$} \\ 
\midrule
Modality 1     & 0.99   & 0.00   & 0.01   & 0 \\
Modality 2     & 0.00   & 0.99   & 0.01   & 0 \\
\midrule
BCF  & 0.0000 & 0.0000 & 1.0000 & 0 \\ 
CBF    & 0.4950 & 0.4950 & 0.0100 & 0 \\ 
BAF     & 0.4950 & 0.4950 & 0.0100 & 0 \\ 
DBF $\lambda = 1$  & 0.0050 & 0.0050 & 0.0001 & $0.9900$                \\ 
DBF $\lambda = 3$  & 0.1533 & 0.1533 & 0.0031 & $0.6903$                \\ 

\bottomrule
\end{tabular}
\label{tab:fusion_comparison}
\end{table}

\subsection{Training Multimodal Evidential Neural Networks}
To fuse information from different modalities as described in the previous subsections, we first need to extract subjective opinions from each modality. For that, we construct evidential neural networks for each modality, which output non-negative evidence values to be later combined using a chosen fusion function. Simultaneous availability of the modalities during training is preferred but not mandatory. If simultaneous availability is not possible, one can effectively integrate new modalities without complete retraining, by training only a classifier for the new modality, and if necessary, fine-tuning the other modalities.

Transforming a traditional deep neural network into an evidential one is straightforward. It involves replacing the softmax activation function in the output layer with a non-negative, monotonically increasing activation function. Common choices include ReLU, softplus, and exponential functions. While \citet{xu2024reliable} utilized a softplus activation layer, our experiments revealed that the slow growth of the softplus function resulted in small evidence values, leading to consistently high uncertainties across modalities. Recently, \citet{pandey2023learn} demonstrated the superiority of the exponential activation function over other alternatives for evidential networks. Hence, we employ the exponential activation function to obtain evidence values. To mitigate numerical instability, we cap the activation values at $10^{13}$ with $f(x) = \frac{10^{13}}{1+10^{13}e^{-x}}$.

For training the neural network, we follow \citet{xu2024reliable} to use 3 part loss, since it also aims to minimize the conflict between modalities. The first part, is the adapted cross-entropy loss per modality and for fused result: 
\begin{equation}
    \begin{aligned}
L_{a c e}\left(\boldsymbol{\alpha}_n\right) & =\int\left[\sum_{j=1}^K-y_{n j} \log p_{n j}\right] \frac{\prod_{j=1}^K p_{n j}^{\alpha_{n j}-1}}{B\left(\boldsymbol{\alpha}_n\right)} d \mathbf{p}_n \\
& =\sum_{j=1}^K y_{n j}\left(\psi\left(S_n\right)-\psi\left(\alpha_{n j}\right)\right),
\end{aligned}
\end{equation}
where $\psi(\cdot)$ is the digamma function, and $\boldsymbol{\alpha}_n$ are the Dirichlet parameters of the $n$-th sample. To force the evidences of incorrect labels to be lower, a regularization term is employed to minimize the Kullback-Leiber divergence of modified Dirichlet distribution parameters and the uniform Dirichlet distribution: 
\begin{equation}
\begin{aligned}
L_{K L}\left(\boldsymbol{\alpha}_n\right) & =K L\left[D\left(\boldsymbol{p}_n \mid \tilde{\boldsymbol{\alpha}}_n\right) \| D\left(\boldsymbol{p}_n \mid \mathbf{1}\right)\right] \\
& =\log \left(\frac{\Gamma\left(\sum_{k=1}^K \tilde{\alpha}_{n k}\right)}{\Gamma(K) \prod_{k=1}^K \Gamma\left(\tilde{\alpha}_{n k}\right)}\right) \\
& +\sum_{k=1}^K\left(\tilde{\alpha}_{n k}-1\right)\left[\psi\left(\tilde{\alpha}_{n k}\right)-\psi\left(\sum_{j=1}^K \tilde{\alpha}_{n j}\right)\right],
\end{aligned}
\end{equation} where $\tilde{\alpha}_n = \mathbf{y}_n+\left(\mathbf{1}-\mathbf{y}_n\right) \odot \boldsymbol{\alpha}_n$ are the Dirichlet parameters after removing the non-misleading evidence, and $\Gamma(\cdot)$ is the gamma function. We can gradually increase the effect of the KL Divergence loss with an annealing coefficient, to allow the network to explore the parameter, and avoid early convergence of the misclassified samples to the uniform distribution \citep{sensoy2018evidential}. 
The annealing coefficient $\sigma_t$ can be chosen as $\sigma_t=\min (1.0, t / T) \in[0,1]$, where $t$ is the current epoch and $T$ is the annealing step. 
\begin{equation}
    L_{a c c}\left(\boldsymbol{\alpha}_n\right)=L_{a c e}\left(\boldsymbol{\alpha}_n\right)+\sigma_t L_{K L}\left(\boldsymbol{\alpha}_n\right)
\end{equation}
Finally, a consistency loss is employed to minimize the degree of conflict between the modalities: 
\begin{equation}
    L_{\text {con }}=\frac{1}{V-1} \sum_{p=1}^V\left(\sum_{q \neq p}^V \operatorname{DC}\left(\boldsymbol{\omega}_n^p, \boldsymbol{\omega}_n^q\right)\right)
\end{equation}
The total loss is calculated with: 
\begin{equation}
L=L_{a c c}\left(\boldsymbol{\alpha}_n\right)+\beta \sum_{v=1}^V L_{a c c}\left(\boldsymbol{\alpha}_n^v\right)+\gamma L_{c o n} .
\end{equation}
\section{EXPERIMENTS}
\label{sec:exp}
\subsection{Experimental setup}

Five multimodal / multi-view datasets will be used for comparison: HandWritten~\citep{multiple_features_72}, CUB~\citep{WahCUB_200_2011}, Scene15~\citep{Ali_2018_Scene15}, Caltech101~\citep{li_andreeto_caltech_2022} and PIE~\citep{gross_2008_multi_pie}.
Please check the supplementary material for the descriptions and details about the datasets. 
As baselines we use three evidential multi-view methods with different opinion aggregation strategies. The Trusted Multiview Classification \citep{han2021trusted} uses Dempster's combination rule or Belief Constraint Fusion (BCF), Trusted Multiview Deep Learning with Opinion Aggregation \citep{liu_trustedmultiview_2022} uses Cumulative Belief Fusion (CBF), Evidential Conflictive Multiview Learning \citep{xu2024reliable}  uses the Belief Averaging Fusion (BAF) and the Generalized Belief Averaging Fusion (GBAF, described in subsection \ref{subsec:gbaf}).

We conduct all training and evaluation on a local cluster equipped with a combination of Nvidia GeForce RTX 2080 and RTX 4090 GPUs. Each experiment is repeated 10 times using different random seeds to ensure robustness. Hyperparameters are selected through 10-fold cross-validation prior to the main experiments. Further details on training procedures and hyperparameter configurations can be found in the supplementary material.  

\subsection{Experimental Results}

\begin{table*}
\centering
\caption{AUC scores of conflictive sample detection based on the uncertainty estimates, using Belief Constraint Fusion (BCF), Cumulative Belief Fusion (CBF), Belief Averaging Fusion (BAF), Generalized Belief Averaging Fusion (GBAF), Discounted Belief Fusion (DBF). $\lambda = 1$ is used for all datasets.}
\label{tab:auc_results}
\begin{tabular}{@{}lccccc@{}}
\toprule
 & CUB & CalTech & HandWritten & PIE & Scene15 \\
\midrule
BCF &  0.53 ± 0.04 &  0.72 ± 0.02 &  0.61 ± 0.01 &  0.36 ± 0.02 &  0.54 ± 0.01 \\
CBF &  0.48 ± 0.05 &  0.56 ± 0.02 &  0.49 ± 0.02 &  0.35 ± 0.03 &  0.53 ± 0.02 \\
BAF &  0.50 ± 0.04 &  0.54 ± 0.02 &  0.51 ± 0.01 &  0.35 ± 0.03 &  0.53 ± 0.02 \\
GBAF &  0.48 ± 0.03 &  0.55 ± 0.02 &  0.51 ± 0.02 &  0.37 ± 0.03 &  0.53 ± 0.01 \\
\textbf{DBF (our)} &  \textbf{0.57 ± 0.06} &  \textbf{1.00 ± 0.00} &  \textbf{0.80 ± 0.02} &  \textbf{0.71 ± 0.04} &  0.53 ± 0.01 \\
\bottomrule
\end{tabular}
\end{table*}
\begin{figure}[t]
    \centering
    \includegraphics[width=1\linewidth]{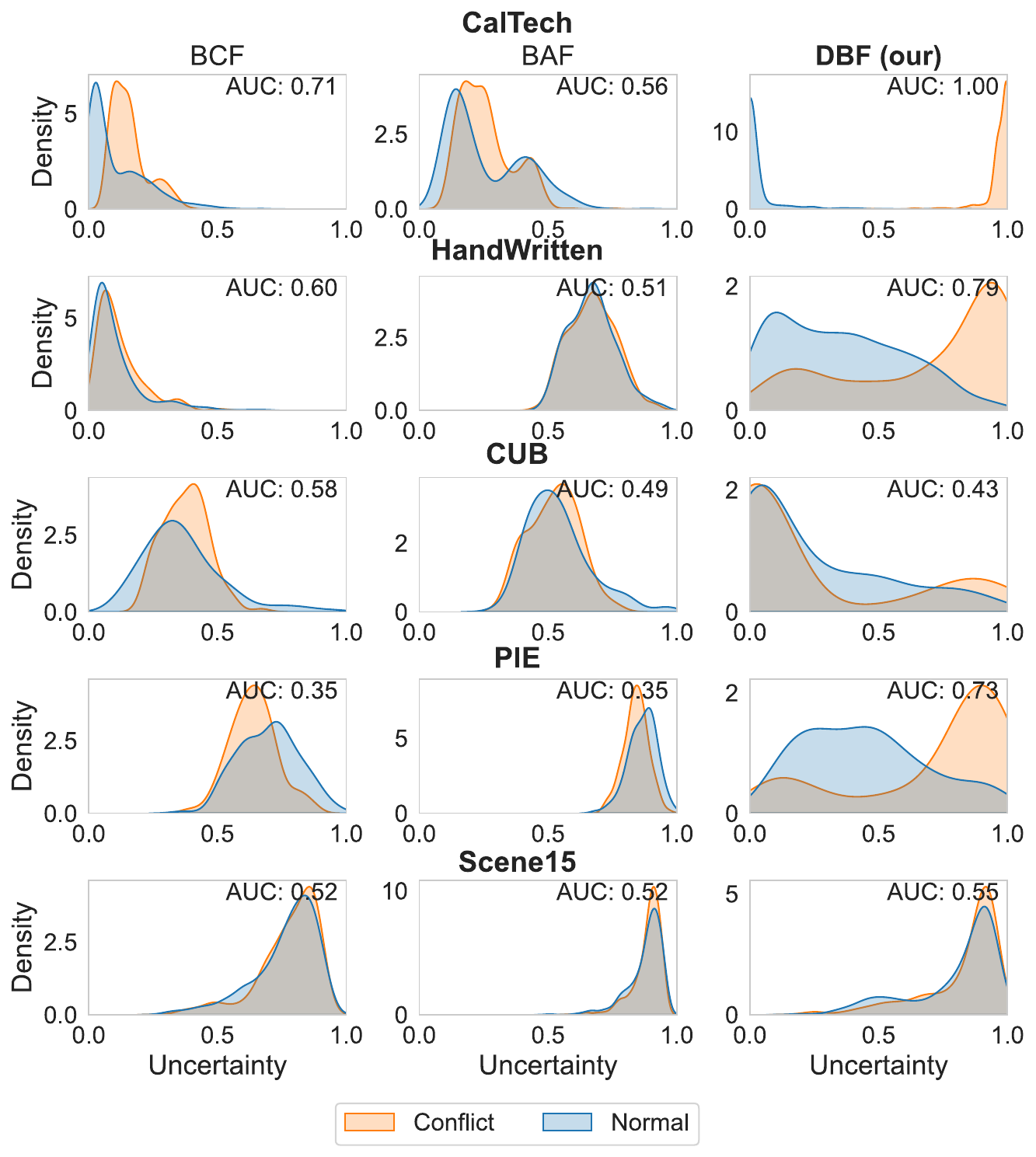}
    \caption{Uncertainty distributions on normal and conflictive test sets using  Belief Constraint Fusion (BCF), Belief Averaging Fusion (BAF), and Discounted Belief Fusion (DBF). $\lambda = 1$ is used for all datasets.}
    \label{fig:uncertainty_plots}
\end{figure}
We evaluate the proposed method by training baseline models on five datasets and testing them on clean and conflicting test sets. Conflicts are introduced by randomly replacing a modality in each sample with one from a different class, simulating misalignment. This setup helps assess whether models produce higher uncertainties on conflicting set compared to clean data.

Figure \ref{fig:uncertainty_plots} and Table \ref{tab:auc_results} show that on most of the benchmark dataset the proposed Discounted Belief Fusion achieves better separation between uncertainties of conflictive and non-conflictive sets compared to the baseline methods. 
However, for the CUB and Scene15 datasets, we observe less effective separation. To understand this, we analyze the uni-modal uncertainties before fusion for each dataset in Figure \ref{fig:modality_uncertainties}. For the CalTech dataset, the uni-modal uncertainties are very low, which leads to a high degree of conflict when misalignment is introduced, resulting in clear separation between conflictive and non-conflictive sets. In contrast, the Scene15 dataset shows high initial uncertainties across modalities, causing low degree of conflict even after misalignment. Thus, although there is misalignment, the conflict remains minimal, and fused uncertainties are high for both sets. For CUB dataset, we observe that one modality has low uncertainty while the other is highly uncertain, hence according to Equation \ref{eq:degree_of_conflict}, the degree of conflict is low since the uncertain modality contributes minimally to the final decision. These results indicate that the proposed approach provides higher uncertainties for conflictive sets when the degree of conflict between modalities is high.

\begin{figure}[t]
    \centering
    \includegraphics[width=1\linewidth]{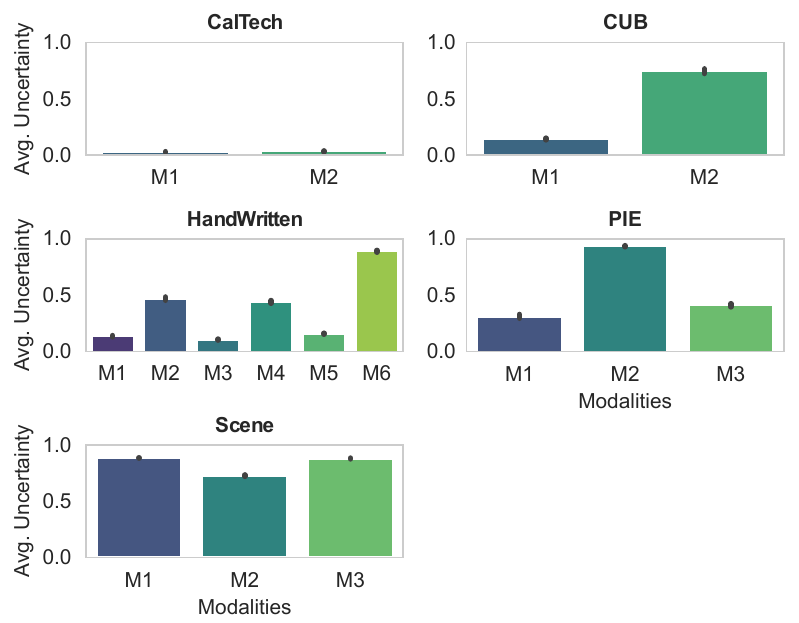}
    \caption{Average uncertainty values across different modalities (denoted as M1, M2, etc.) for CalTech, CUB, HandWritten, Scene, and PIE datasets. Error bars indicate the standard deviation, highlighting the low variability in uncertainty measurements. }
    \label{fig:modality_uncertainties}
\end{figure}
\begin{figure}
    \centering
    \includegraphics[width=1\linewidth]{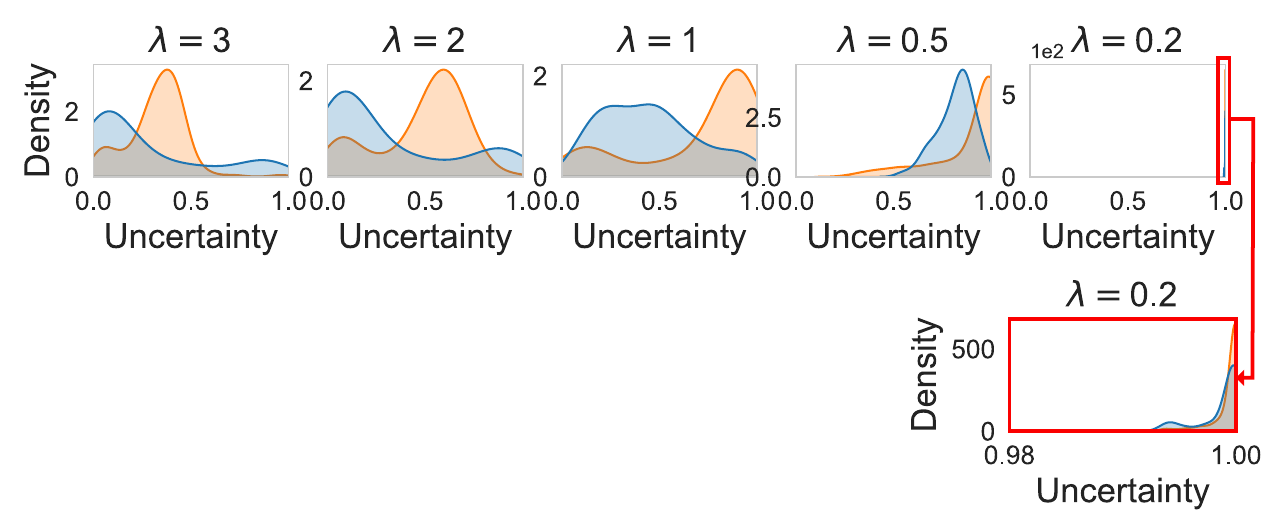}
    \caption{The uncertainty distributions for conflictive and non-conflictive sets with different values of $\lambda$ on PIE dataset.}
    \label{fig:lambda_unc}
\end{figure}
Figure \ref{fig:lambda_unc} demonstrates how the parameter $\lambda$ controls the strictness of discounting. Lower $\lambda$ values lead to more discounting and higher uncertainties. The choice of $\lambda$ can be made using a validation set or based on problem-specific requirements, depending on the desired strictness against conflict.  

In the ablation studies, we tested all models using both softplus (for consistency with prior work) and exponential activation functions (see Appendix \ref{sec:albation} for details). Results show that while the exponential activation improves model performance, it alone does not effectively separate uncertainties between conflictive and non-conflictive samples.

Our experiments showed that the proposed approach does not impact classification accuracy (see Appendix \ref{sec:results}). In most cases, the fusion function had minimal effect on final accuracy, allowing the proposed method to deliver improved uncertainty estimates without compromising the predictive performance of the models.

\begin{table}[h]
    \centering
    \caption{Training times of different algorithms for each dataset. The means and standard deviations of 10 runs are reported in seconds.}
    \begin{tabular}{lccc}
        \toprule
        Dataset & BAF & BCF & DBF (Our) \\
        \midrule
        Scene15     & $757 \pm 22$  & $854 \pm 25$  & $798 \pm 11$  \\
        CalTech     & $413 \pm 2$   & $419 \pm 5$   & $455 \pm 3$   \\
        PIE         & $133 \pm 2$   & $146 \pm 2$   & $140 \pm 2$   \\
        HandWritten & $607 \pm 4$   & $752 \pm 6$   & $602 \pm 12$  \\
        CUB         & $94 \pm 0.5$  & $99 \pm 1$    & $105 \pm 1$   \\
        \bottomrule
    \end{tabular}
    \label{tab:training_times}
\end{table}

As discussed in the Appendix~\ref{sec:algo}, the conflict matrix computation has a quadratic time complexity relative to the number of modalities. However, since the number of modalities is typically not large, the practical time overhead of using DBF remains low. To validate this, we present the training times of various baseline algorithms alongside the DBF method in Table~\ref{tab:training_times}. As observed, DBF is not consistently slower than other algorithms, and even when it is, the difference is not much.

\section{CONCLUSION}
\label{sec:concl}
We introduced Discounted Belief Fusion (DBF), which discounts beliefs from conflicting sources towards uncertainty, achieving better separation between conflictive and non-conflictive samples compared to existing baselines. Extensive experiments on benchmark datasets show that DBF yields higher uncertainty for conflictive samples, especially under high conflict. Our analysis reveals that DBF’s performance adapts to initial modality uncertainty levels, making it suitable for varying alignment quality. Moreover, DBF maintains high classification accuracy, providing robust uncertainty estimates without compromising predictive performance, thereby enhancing interpretability in multimodal learning.

In future work, we plan to further validate our approach using larger benchmark datasets designed for uncertainty-aware multimodal learning, such as the LUMA dataset~\cite{bezirganyan2024luma}. Additionally, we plan to explore modality interactions and dependency structures, which could further enhance our uncertainty quantification strategy.

\subsubsection*{Acknowledgements}
Centre de Calcul Intensif d'Aix-Marseille is acknowledged for granting access to its high-performance computing resources.
% To preserve the anonymity, please include acknowledgments \emph{only} in the camera-ready papers.

% \subsubsection*{References}

\bibliography{bibliography}
%%%%%%%%%%%%%%%%%%%%%%%%%%%%%%%%%%%%%%%%%%%%%%%%%%%%%%%%%%%%

\section*{Checklist}

 \begin{enumerate}

 \item For all models and algorithms presented, check if you include:
 \begin{enumerate}
   \item A clear description of the mathematical setting, assumptions, algorithm, and/or model. Yes
   \item An analysis of the properties and complexity (time, space, sample size) of any algorithm. Yes. Please refer to the Appendix \ref{sec:algo}. 
   \item (Optional) Anonymized source code, with specification of all dependencies, including external libraries. Yes
 \end{enumerate}

 \item For any theoretical claim, check if you include:
 \begin{enumerate}
   \item Statements of the full set of assumptions of all theoretical results. Yes
   \item Complete proofs of all theoretical results. Yes
   \item Clear explanations of any assumptions. Yes
 \end{enumerate}

 \item For all figures and tables that present empirical results, check if you include:
 \begin{enumerate}
   \item The code, data, and instructions needed to reproduce the main experimental results (either in the supplemental material or as a URL). Yes, code included in an anonymized repository
   \item All the training details (e.g., data splits, hyperparameters, how they were chosen). Yes. Please refer to Appednix \ref{sec:hyperparams}. 
         \item A clear definition of the specific measure or statistics and error bars (e.g., with respect to the random seed after running experiments multiple times). Yes
         \item A description of the computing infrastructure used. (e.g., type of GPUs, internal cluster, or cloud provider). Yes
 \end{enumerate}

 \item If you are using existing assets (e.g., code, data, models) or curating/releasing new assets, check if you include:
 \begin{enumerate}
   \item Citations of the creator If your work uses existing assets. Yes
   \item The license information of the assets, if applicable. Yes
   \item New assets either in the supplemental material or as a URL, if applicable. Not Applicable
   \item Information about consent from data providers/curators. Not Applicable
   \item Discussion of sensible content if applicable, e.g., personally identifiable information or offensive content. Not Applicable
 \end{enumerate}

 \item If you used crowdsourcing or conducted research with human subjects, check if you include:
 \begin{enumerate}
   \item The full text of instructions given to participants and screenshots. Not Applicable
   \item Descriptions of potential participant risks, with links to Institutional Review Board (IRB) approvals if applicable. Not Applicable
   \item The estimated hourly wage paid to participants and the total amount spent on participant compensation. Not Applicable
 \end{enumerate}

 \end{enumerate}

\onecolumn

\appendix
\aistatstitle{Supplementary Material}

The supplementary material is organized as follows: In Section \ref{sec:algo}, we present the algorithm for the proposed Discounted Belief Fusion (DBF) and its time complexity analysis. Section \ref{sec:hyperparams} provides details on the training process and the hyperparameters used. In Section \ref{sec:data}, we offer additional information about the benchmark datasets employed in the paper. Section \ref{sec:results} includes supplementary results that could not be included in the main paper, and Section \ref{sec:albation} presents ablation studies.

\section{DISCOUNTED BELIEF FUSION ALGORITHM}
\label{sec:algo}

\begin{algorithm}[H]
\caption{Discounted Belief Fusion (DBF) Algorithm}
\label{alg:dbf}
\begin{algorithmic}[1]
    \STATE \textbf{Input:} Set of $V$ opinions $\leftarrow \{\boldsymbol{\omega}^1, \boldsymbol{\omega}^2, \dots, \boldsymbol{\omega}^V\}$ with $K$ classes, hyperparameter $\lambda$
    \FOR{$i = 1$ to $V$}
        \FOR{$j = 1$ to $V$}
            \STATE Compute projected distance: 
            $
            \operatorname{PD}(\boldsymbol{\omega}^i, \boldsymbol{\omega}^j) \gets \frac{1}{2} \sum_{k=1}^K |p^i_k - p^j_k|
            $
            \STATE Compute conjunctive certainty: 
            $
            \operatorname{CC}(\boldsymbol{\omega}^i, \boldsymbol{\omega}^j) \gets (1 - u^i)(1 - u^j)
            $
            \STATE Compute degree of conflict: 
            $
            \operatorname{DC}(\boldsymbol{\omega}^i, \boldsymbol{\omega}^j) \gets \operatorname{PD}(\boldsymbol{\omega}^i, \boldsymbol{\omega}^j) \times \operatorname{CC}(\boldsymbol{\omega}^i, \boldsymbol{\omega}^j)
            $
            \STATE Compute Conflict Matrix: $C_{ij} \gets \operatorname{DC}(\boldsymbol{\omega}^i, \boldsymbol{\omega}^j)$
            \STATE Compute Agreement Matrix: $A_{ij} \gets (1 - (C_{ij})^\lambda)^{1/\lambda}$
        \ENDFOR
    \ENDFOR
    \FOR{$v = 1$ to $V$}
        \STATE Compute discounting factor: 
        $
        \eta^v \gets \prod_{i=1}^V A_{vi}
        $
        \STATE Apply discounting: 
        $
        \boldsymbol{b}^{v'} \gets \eta^v \boldsymbol{b}^v,\ u^{v'} \gets 1 - \eta^v + \eta^v u^v
        $
    \ENDFOR
    \STATE Compute fused belief masses:
    $
    \mathbf{b}^{\diamond V} \gets \frac{\sum_{v=1}^V \left( \boldsymbol{b}^v \prod_{i \neq v} u^i \right)}{\sum_{v=1}^V \left( \prod_{i \neq v} u^i \right)}
    $
    \STATE Compute fused uncertainty: 
    $
    u^{\diamond V} \gets \frac{V \prod_{v=1}^V u^v}{\sum_{v=1}^V \left( \prod_{i \neq v} u^i \right)}
    $
    \RETURN Fused opinion $\boldsymbol{\omega}^\diamond$ with $\boldsymbol{b}^\diamond = [b_1, \dots, b_K]$ and uncertainty $u$
\end{algorithmic}
\end{algorithm}

The time complexity of the proposed Discounting Belief Fusion (DBF) described in Algorithm \ref{alg:dbf} is as follows. Line 4 has a complexity of $\mathcal{O}(K)$, while lines 5-8 each have $\mathcal{O}(1)$ complexity, resulting in a complexity of $\mathcal{O}(V^2K)$ for lines 2-10. Lines 12-13 have a complexity of $\mathcal{O}(V + K)$, leading to a complexity of $\mathcal{O}(V^2 + VK)$ for lines 11-14. Lastly, lines 15-16 contribute $\mathcal{O}(V^2K)$. Therefore, the overall time complexity of the algorithm is $\mathcal{O}(V^2K)$.

\nopagebreak
\section{TRAINING DETAILS AND HYPERPARAMETERS}
\label{sec:hyperparams}
In this section, we provide additional details about the training process for both the baseline models and the proposed approach, with the aim of enhancing the reproducibility of the presented results.

We utilize 2-layer fully connected neural networks with ReLU activations for retrieving the evidences for each modality. In the final layer, we use either a softplus activation function (for baseline models, as proposed in their respective papers) or an exponential activation, 
capped at $10^{13}$ for numerical stability. To maintain differentiability, the activation function is modified to $ f(x) = \frac{10^{13}}{1 + 10^{13}e^{-x}} $, which behaves exponentially for small inputs but gradually flattens as \(x\) increases.
\pagebreak
To determine the optimal hyperparameters, we conducted a hyperparameter search using 10-fold cross-validation. To minimize time and resource consumption, this search was only performed on the ECML model \cite{xu2024reliable}. This ensures that any slight performance differences due to hyperparameters would favor the baseline rather than our proposed approach. The selected hyperparameters are presented in Table \ref{tab:hyperparameters}. All experiments have been performed 10 times with different random seeds, and the average and the standard deviation values have been reported in the results. 

\begin{table}
    \caption{Hyperparameters for Different Datasets}
    \centering
    \begin{tabular}{lccccc}
        \toprule
        \textbf{Dataset} & \textbf{Learning Rate} & \textbf{Annealing Step} & \textbf{Gamma} & \textbf{Weight Decay}\\
        \midrule
        CUB & 0.003 & 10 & 1 & 0.0001  \\
        HandWritten & 0.003 & 50 & 0.7 &  0.00001  \\
        PIE & 0.003 & 50 & 0.5 & 0.00001  \\
        CalTech & 0.0003 & 30 & 0.5 & 0.0001  \\
        Scene & 0.01 & 30 & 0.5 & 0.00001  \\
        \bottomrule
    \end{tabular}
    \label{tab:hyperparameters}
\end{table}

\section{DETAILS ABOUT THE DATASETS}
\label{sec:data}

\textbf{HandWritten} \citep{multiple_features_72} is a small dataset comprised of 2,000 instances of handwritten digits. Each digit is represented using 6 different features, and each feature will be used as a modality. The dataset is licensed under \href{https://creativecommons.org/licenses/by/4.0/legalcode}{CC BY 4.0} License. 

\textbf{CUB} \citep{WahCUB_200_2011} is a textual dataset comprised of descriptions of 11,788 bird belonging to 200 different categories. To be consistent with the previous research, we will use only the first 10 categories and extract the HOG and MBH features as two views of the dataset.

\textbf{Scene15} \citep{Ali_2018_Scene15} consists of 4485 images of 15 indoor and outdoor scene categories. GIST, PHOG and LBP features will be used as different views of the dataset. The dataset is licensed under \href{https://creativecommons.org/licenses/by/4.0/legalcode}{CC BY 4.0} License. 

\textbf{Caltech101} \citep{li_andreeto_caltech_2022} consists of 8677 images from 101 different classes. To be consistent with the previous research, we will use only the first 10 classes and use features extracted from DECAF and VGG19 models as different views of the dataset. The dataset is licensed under \href{https://creativecommons.org/licenses/by/4.0/legalcode}{CC BY 4.0} License. 

\textbf{PIE} \citep{gross_2008_multi_pie} contains 680 instances from 68 different classes. The intensity, LBP and Gabor features are used as 3 different views of the dataset. The \href{https://wlb-prod-us06-user-uploads.s3.us-west-1.amazonaws.com/88f59f5065c26163cad1638f052b269f.pdf?X-Amz-Algorithm=AWS4-HMAC-SHA256&X-Amz-Credential=AKIAIAUSRBZZ2WPI57ZQ%2F20241015%2Fus-west-1%2Fs3%2Faws4_request&X-Amz-Date=20241015T122642Z&X-Amz-Expires=900&X-Amz-SignedHeaders=host&X-Amz-Signature=5a0fb558b5fed4b1ff14314ed7c60fc140b0993dc561a7b597364050ec5e2335}{license} of the dataset allows it to be used only for noncommercial research purposes. 
\section{ADDITIONAL RESULTS}
\label{sec:results}
In Tables \ref{tab:acc_results} and \ref{tab:conf_results} please find the classification accuracy scores on normal and conflictive test sets respectively.  As we can see, the proposed belief constraint fusion does not affect the classification accuracy. In fact, the results indicate, that none of the fusion functions affects the classification accuracy in a statistically significant manner. The Discounted Belief Fusion, however, provides better separation for conflicting and non-conflicting uncertainties as seen in the main paper.

\begin{table}[H]
\centering
\caption{Mean Accuracy (\%) and Standard Deviation for Different Aggregations on Normal test set}
\label{tab:acc_results}
\begin{tabular}{lccccc}
\toprule
 &           CUB &       CalTech &   HandWritten &           PIE &         Scene \\
 \midrule
 \textbf{BCF        } &  91.08 ± 3.23 &  99.45 ± 0.20 &  97.97 ± 0.55 &  94.78 ± 1.75 &  72.03 ± 1.22 \\
\textbf{CBF        } &  91.58 ± 3.50 &  99.52 ± 0.17 &  97.88 ± 0.52 &  92.57 ± 2.66 &  72.93 ± 1.69 \\
\textbf{BAF   } &  91.92 ± 2.79 &  99.36 ± 0.23 &  97.53 ± 0.59 &  93.82 ± 1.68 &  71.52 ± 1.76 \\
\textbf{GBAF    } &  90.25 ± 2.27 &  99.05 ± 0.49 &  97.85 ± 0.45 &  95.07 ± 1.09 &  72.26 ± 1.42 \\
\textbf{DBF (our)        } &  89.33 ± 2.60 &  99.34 ± 0.34 &  97.60 ± 0.71 &  93.90 ± 1.68 &  73.14 ± 1.67 \\
\bottomrule
\end{tabular}
\end{table}

\begin{table}[H]
\centering
\caption{Mean Accuracy (\%) and Standard Deviation for Different Aggregations on Conflicting test set}
\label{tab:conf_results}
\begin{tabular}{lccccc}
\toprule
 &           CUB &       CalTech &   HandWritten &           PIE &         Scene \\
\midrule
\textbf{BCF        } &  52.42 ± 3.22 &  61.68 ± 1.47 &  96.97 ± 0.88 &  91.40 ± 2.13 &  57.17 ± 1.18 \\
\textbf{CBF        } &  53.50 ± 4.94 &  61.00 ± 1.32 &  95.03 ± 0.99 &  90.81 ± 2.24 &  55.79 ± 1.04 \\
\textbf{BAF   } &  50.83 ± 4.50 &  60.48 ± 1.93 &  90.17 ± 1.71 &  84.41 ± 2.65 &  56.38 ± 1.84 \\
\textbf{GBAF    } &  49.25 ± 3.32 &  61.32 ± 1.68 &  95.67 ± 0.51 &  91.69 ± 1.68 &  56.15 ± 0.95 \\
\textbf{DBF (our)        } &  52.33 ± 4.91 &   61.51 ± 2.10  &  97.58 ± 0.74 &  90.29 ± 1.70 &  57.76 ± 1.29 \\
\bottomrule
\end{tabular}
\end{table}

In Figure \ref{fig:full_uncertainty_distributions}, the uncertainty distributions for normal and conflicting sets are shown across different fusion methods (plots for CBF and GBAF were omitted from the main paper due to space limitation). As we can see, the proposed DBF method continues to exhibit superior separation between conflicting and non-conflicting sets.

\begin{figure}[h]
    \centering
    \includegraphics[width=1\linewidth]{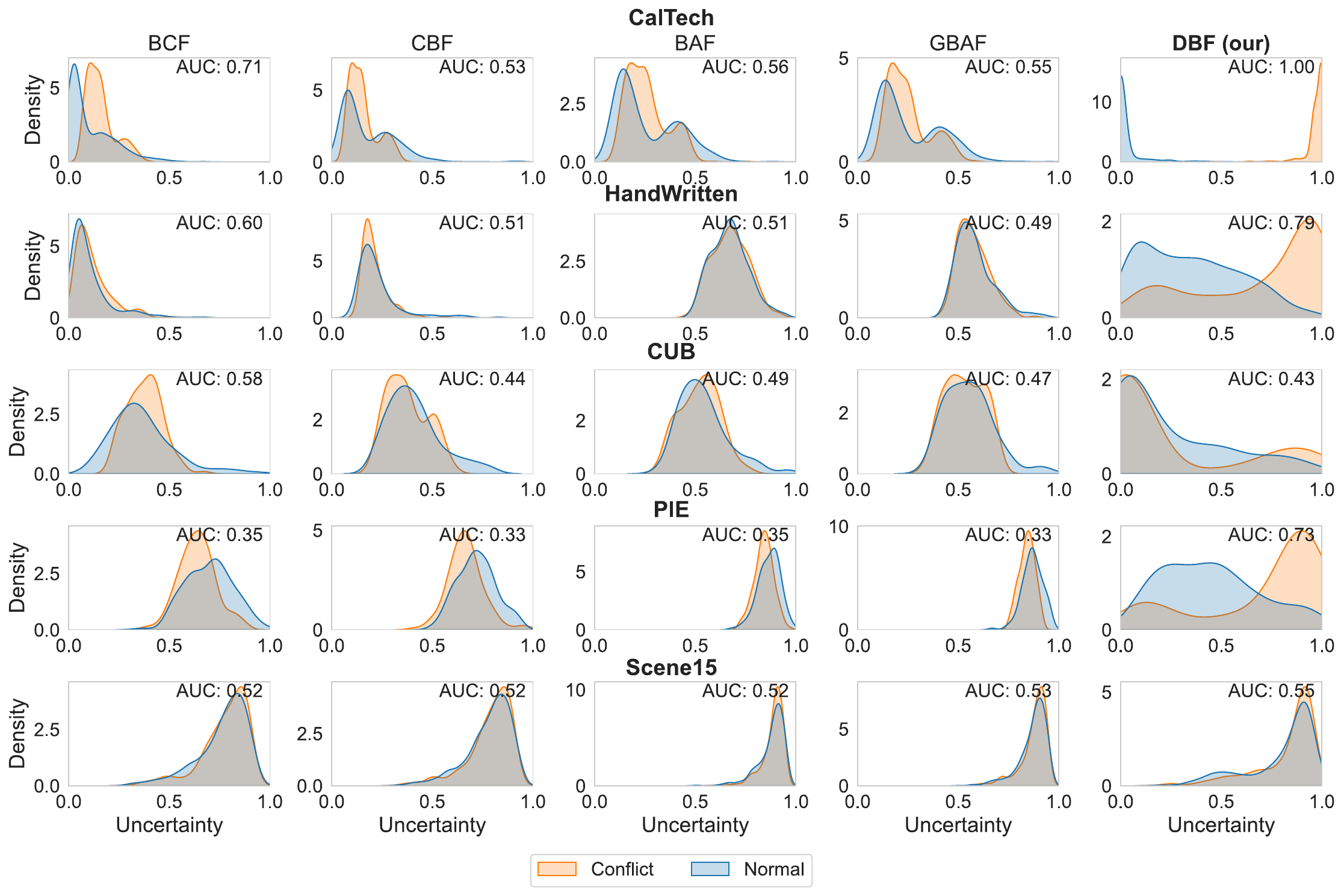}
    \caption{Uncertainty distributions on normal and conflictive test sets using  Belief Constraint Fusion (BCF), Cumulative Belief Fusion (CBF), Generalized Belief Averaging Fusion (GBAF), Belief Averaging Fusion (BAF), and Discounted Belief Fusion (DBF). $\lambda = 1$ is used for all datasets.}
    \label{fig:full_uncertainty_distributions}
\end{figure}

\section{ABLATION STUDIES}
\label{sec:albation}
In this section, we present ablation studies to analyze the effects of both the activation function and the fusion method. As shown in Table \ref{tab:ablation}, the exponential activation function improves performance in Discounted Belief Fusion (DBF) in contrast with softplus activation function. However, it does not yield better results in Belief Averaging Fusion (BAF). This suggests that while the activation function plays a role, it is not solely responsible for better outcomes. The results emphasize the importance of the interaction between both the fusion method and the activation function in the proposed approach.

\begin{table}[t]
\centering
\caption{Ablation study examining the impact of activation functions on the performance of different fusion methods.}
\label{tab:ablation}
\begin{tabular}{lcccc}
\toprule \multirow{2}{*}{\textbf{Dataset}} & \multicolumn{2}{c}{\textbf{Discounted Belief Fusion}} & \multicolumn{2}{c}{\textbf{Belief Averaging Fusion}} \\
\cmidrule(lr){2-3} \cmidrule(lr){4-5} & \textbf{Exponential} & \textbf{Softplus} & \textbf{Exponential} & \textbf{Softplus} \\
\midrule
CUB         &  0.57 ± 0.06 &    0.58 ± 0.03 &  0.44 ± 0.03 &     0.50 ± 0.04 \\
CalTech     &    1.00 ± 0.00 &    0.95 ± 0.01 &  0.41 ± 0.01 &    0.54 ± 0.02 \\
HandWritten &   0.80 ± 0.02 &    0.68 ± 0.02 &  0.47 ± 0.02 &    0.51 ± 0.01 \\
PIE         &  0.71 ± 0.04 &    0.34 ± 0.03 &  0.31 ± 0.02 &    0.35 ± 0.03 \\
Scene       &  0.53 ± 0.01 &    0.53 ± 0.01 &  0.52 ± 0.01 &    0.53 ± 0.02 \\
\bottomrule
\end{tabular}
\end{table}
\vfill

\end{document}

% --- supplement: supplement.tex ---

% If your paper is accepted and the title of your paper is very long,
% the style will print as headings an error message. Use the following
% command to supply a shorter title of your paper so that it can be
% used as headings.
%
%\runningtitle{I use this title instead because the last one was very long}

% If your paper is accepted and the number of authors is large, the
% style will print as headings an error message. Use the following
% command to supply a shorter version of the authors names so that
% they can be used as headings (for example, use only the surnames)
%
%\runningauthor{Surname 1, Surname 2, Surname 3, ...., Surname n}

% Supplementary material: To improve readability, you must use a single-column format for the supplementary material.
\onecolumn
\aistatstitle{Instructions for Paper Submissions to AISTATS 2025: \\
Supplementary Materials}

\section{FORMATTING INSTRUCTIONS}

To prepare a supplementary pdf file, we ask the authors to use \texttt{aistats2025.sty} as a style file and to follow the same formatting instructions as in the main paper.
The only difference is that the supplementary material must be in a \emph{single-column} format.
You can use \texttt{supplement.tex} in our starter pack as a starting point, or append the supplementary content to the main paper and split the final PDF into two separate files.

Note that reviewers are under no obligation to examine your supplementary material.

\section{MISSING PROOFS}

The supplementary materials may contain detailed proofs of the results that are missing in the main paper.

\subsection{Proof of Lemma 3}

\textit{In this section, we present the detailed proof of Lemma 3 and then [ ... ]}

\section{ADDITIONAL EXPERIMENTS}

If you have additional experimental results, you may include them in the supplementary materials.

\subsection{The Effect of Regularization Parameter}

\textit{Our algorithm depends on the regularization parameter $\lambda$. Figure 1 below illustrates the effect of this parameter on the performance of our algorithm. As we can see, [ ... ]}

\vfill